\documentclass{article}

\PassOptionsToPackage{numbers, compress}{natbib}


\usepackage[preprint]{neurips_2020}

\usepackage{amsmath,amsfonts,bm}









\def\eqref#1{equation~\ref{#1}}









\def\1{\bm{1}}








\def\vw{{\bm{w}}}



\DeclareMathAlphabet{\mathsfit}{\encodingdefault}{\sfdefault}{m}{sl}
\SetMathAlphabet{\mathsfit}{bold}{\encodingdefault}{\sfdefault}{bx}{n}















\usepackage[utf8]{inputenc} 
\usepackage[T1]{fontenc}    
\usepackage{hyperref}       
\usepackage{url}            
\usepackage{booktabs}       
\usepackage{amsfonts}       
\usepackage{nicefrac}       
\usepackage{microtype}      

\usepackage{natbib}
\usepackage{amsmath}
\usepackage{amsthm}
\usepackage{graphicx}
\usepackage{subcaption} 

\newtheorem{theorem}{Theorem}[section]

\title{On Regularization of Gradient Descent,\\Layer Imbalance and Flat Minima}

\author{Boris Ginsburg\\
  NVIDIA, Santa Clara, CA USA\\
  \texttt{bginsburg@nvidia.com} \\
}

\begin{document}

\maketitle

\begin{abstract}
  
We analyze the training dynamics for deep linear networks using a new metric --  layer imbalance -- which defines the flatness of a solution. We demonstrate that different regularization methods, such as weight decay or noise data augmentation, behave in a similar way. Training has two distinct phases: 1) `optimization' and 2) `regularization'. First, during the optimization phase, the loss function monotonically decreases, and the trajectory goes toward a minima manifold. Then, during the regularization phase, the layer imbalance decreases, and the trajectory goes along the minima manifold toward a flat area.  Finally, we extend the analysis for stochastic gradient descent and show that SGD works similarly to noise regularization.
\end{abstract}

\section{Introduction}
 
In this paper, we analyze regularization methods used for  training of deep neural networks. To understand how regularization like weight decay and noise data augmentation work, we study gradient descent (GD) dynamics for deep linear networks (DLNs). We study deep networks with \textit{scalar} layers to exclude factors related to over-parameterization and to focus on factors specific to deep models. 
Our analysis is based on the concept of \textit{flat  minima}
\cite{hochreiter1994a}. We call a region in weight space \textit{flat}, if each solution from that region has a similar small loss. 
We show that minima flatness is related to a new metric, \textit{layer imbalance}, which measures the difference between the norm of network layers. Next, we  analyze layer imbalance dynamics of gradient descent (GD) for DLNs using a \textit{trajectory-based} approach \cite{saxe2013}.

With these tools, we prove the following results:
\begin{enumerate}
  \item Standard regularization methods such as weight decay and noise data augmentation, decrease  layer imbalance during training and drive trajectory toward flat minima.
  \item Training for GD with regularization has two  distinct phases: (1) `optimization' and (2) `regularization'. During the optimization phase, the loss monotonically decreases, and the trajectory goes toward minima manifold. During the regularization phase,  layer imbalance  decreases and the trajectory goes along minima manifold toward flat area.
  \item  Stochastic Gradient Descent (SGD) works similarly to  implicit noise regularization.
\end{enumerate}

\section{Linear neural networks}

We begin with a linear regression $y = w \cdot x + b$ with mean squared error on scalar samples $\{x_i,y_i\}$:
\begin{equation}
\label{regression}
 E(w,b)=\frac{1}{N} \sum (w\cdot x_i +b-y_i)^2\rightarrow min
\end{equation}
Let's  center and normalize the training dataset in the following way:
\begin{align} 
\label{norm_data}
  &\sum x_i = 0; \; \; \frac{1}{N}\sum x^2_i = 1;
  &\sum y_i = 0; \; \;  \frac{1}{N}\sum x_i y_i=1. 
\end{align}
The solution for this normalized linear regression is $(w,b)=(1,0)$. 

Next, let's replace  $y=w\cdot x+b$ with a  linear network with $d$ scalar layers $\vw=(w_1,\dots, w_d)$:
\begin{align}
\label{dln}
y=w_1\cdots w_d \cdot x + b
\end{align}
Denote  $\mathbf{W:=w_1\cdots w_d}$. 
The loss function for the new problem is:
\begin{equation*}
   E(\vw, b)=\frac{1}{N} \sum(W \cdot x_i + b - y_i)^2 \rightarrow min
\end{equation*}
Now the loss $E(\vw,.) $ is a non-linear (and non-convex) function with respect to the weights $\vw$. 
For the normalized dataset (\ref{norm_data}), network training is equivalent to the following  problem:
\begin{equation}
\label{eq:dln}
    L(\vw)=(w_1\cdots w_d -1)^2\rightarrow \min
\end{equation}
Such linear networks with depth-2 have been studied in \citet{baldi1989b}, who showed that all minima for the problem (\ref{eq:dln}) are global and that all other critical points are saddles. 

\subsection{Flat minima}
Following Hochreiter  et al \cite{hochreiter1994a}, we are interested in \textit{flat minima} -- ``a region in weight space with the property that each weight from that region has similar small error". In contrast, sharp minima are  regions where the function can increase rapidly.  Let's compute the loss gradient $\nabla L(\vw)$:
\begin{align} 
\label{eq:grad}
  \frac{\partial L}{\partial w_i}
  &=2(w_1\cdots w_d -1)(w_1\cdots w_{i-1}w_{i+1}\cdots w_d)=2(W-1)(W/w_i)
\end{align}
Here we denote $\mathbf{W/w_i:=w_1\cdots w_{i-1}\cdot w_{i+1}\cdots w_d}$ for brevity. 
The minima of loss $L$ are located on hyperbola $w_1\cdots w_d=1$ (Fig.~\ref{fig:dln-loss}). 
Our interest in flat minima is related to training robustness.
Training in the flat area is more stable than in the sharp area: the gradient $\dfrac{\partial L}{\partial w_i}$ vanishes if $|w_i|$ is very large, and the gradient explodes if $|w_i|$ is very small.

\begin{figure}[htb!]
\centering
\includegraphics[width=0.32 \textwidth]{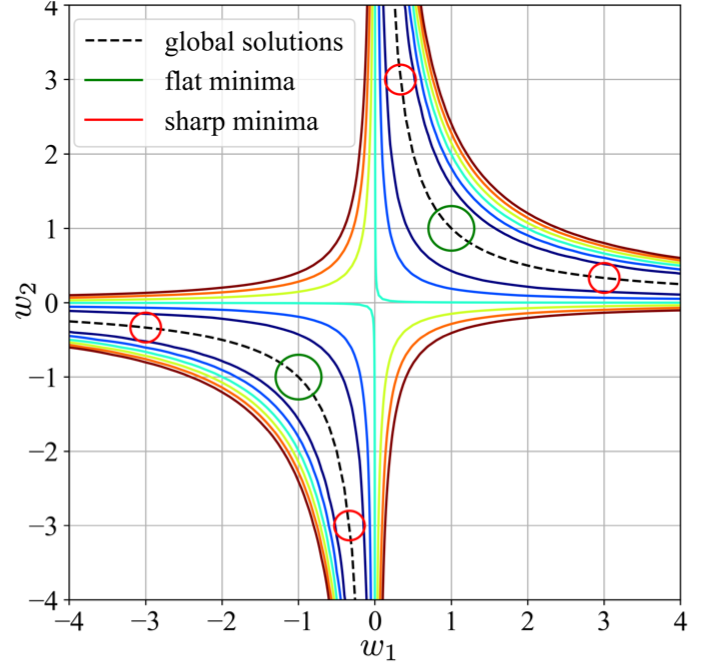}
\caption{2D-contour plot of the loss  $L(w_1, w_2)=(w_1 w_2 - 1)^2$ for the linear network with two layers. The loss $L$ has only global minima, located on the hyperbola $w_1 w_2=1$. Minima  near $(-1,-1)$ and $(1,1)$ are flat, and minima near the axes are sharp.}
\label{fig:dln-loss}
\end{figure}

It was suggested by Hochreiter et al \cite{hochreiter1994b} that flat minima  have smaller generalization errors than sharp minima.  \citet{keskar2016} observed that large-batch training tends to converge towards a sharp minima with a  significant number of large positive eigenvalues of Hessian. They suggested that sharp minima generalize worse than flat minima, which have smaller eigenvalues. In contrast, \citet{Dinh2017} argued that flatness of minima can't be directly applied to explain generalization; since  both flat and sharp minima represent the same function,  they perform equally on a validation set. 

The question of how minima flatness is related to good generalization is out of scope of this  paper.

\subsection{Layer imbalance}

In this section we define a new metric related to the flatness of the minimizer -- layer imbalance.

Dinh \cite{Dinh2017} showed that minima flatness  is defined by the largest  eigenvalue of Hessian $H$:
\begin{equation*}
 H(\vw)=2
 \begin{bmatrix}
  \dfrac{W^2}{w_1^2} & \dfrac{(2W-1)W}{w_1w_2} &\dots &  \dfrac{(2W-1)W}{w_1w_d}\\
  \dfrac{(2W-1)W}{w_2w_1} & \dfrac{W^2}{w_2^2} &\dots& \dfrac{(2W-1)W}{w_2w_d}\\ 
   \dots& \dots&\dots&\dots\\
  \dfrac{(2W-1)W}{w_dw_1} & \dfrac{(2W-1)W}{w_dw_2} &\dots& \dfrac{W^2}{w_d^2}
 \end{bmatrix}
\end{equation*}
The eigenvalues of the Hessian $H(\vw)$  are $\{0,\dots,0,\sum{\dfrac{1}{w_i^2}}\}$. 
Minima close to the axes are sharp. Minima close to the origin are flat. Note that flat minima are balanced: $|w_i|\approx 1$ for all layers.

In the spirit of \cite{Arora2019, neyshabur2015}, let's define \textit{layer imbalance} for a deep linear network:
\begin{equation}
    D(\vw):= \max_{i,j}| \  || w_{i}||^2 - ||w_j||^2 \ |
\end{equation}
Minima with low layer imbalance are flat, and minima with high layer imbalance are sharp.

\section{Implicit regularization for gradient descent}

In this section, we explore the training dynamics for continuous and discrete gradient descent.

\subsection{Gradient descent: convergence analysis}
We start with an analysis of training dynamics for \textit{continuous} GD. By taking a time limit for gradient descent:
$ w_i(t+1)=w_i(t)-\lambda \cdot  \nabla L(\vw)$,  we obtain the  following DEs \cite{saxe2013}:
\begin{align} 
\label{cont_GD}
 \frac{dw_i}{dt}&=-\lambda\frac{\partial L}{\partial w_i}=-2\lambda (W-1)(W/w_i)
\end{align}
For continuous GD, the loss function monotonically decreases:
\begin{align*} 
 \frac{dL}{dt} &
   =\sum \big( \frac{\partial L}{\partial w_i}\cdot \frac{dw_i}{dt}\big) 
   = -4\lambda(W-1)^2 W^2 \big(\sum{\frac{1}{w_i^2}}\big)=-4\lambda W^2 \big(\sum{\frac{1}{w_i^2}}\big)\cdot L(t) \leq 0
\end{align*}
The  trajectory for continuous GD is hyperbola:
$w_i^2(t) - w_j^2(t)=$ const (see
Fig.~\ref{fig:traj_hyperbola}) \citep{saxe2013} . The layer imbalance remains constant during training. So if training starts close to the origin, then a final point will also have a small layer imbalance and a minimum will be flat.

Let's turn from continuous  to  regular gradient descent:\footnote{We omit $t$ in the right part for brevity, so $w_i$ means $w_i(t)$.}
\begin{align}
\label{eq:dgd_step}
  w_i(t+1) &= w_i - 2\lambda \frac{\partial L}{\partial w_i}= w_i - 2\lambda(W-1)(W/w_i)
\end{align}
We would like to find conditions, which would guarantee that the loss monotonically decreases.
For any \textit{fixed} learning rate, one can find a point $\vw$,  such that the loss will increase after the GD step.\footnote{For example, consider the network with 2 layers. The loss $L$ after GD step is:
\begin{align*}
& L(t+1) = \big(w_1(t+1) w_2(t+1)-1\big)^2=\big((w_1 - 2\lambda(w_1 w_2 -1)w_2) (w_2 -  2\lambda(w_1 w_2 -1)w_1)-1\big)^2\\
  & =(w_1 w_2 -1)^2  \big(1 - 2\lambda(w_1^2 + w_2^2) + 4\lambda^2(w_1 w_2-1)\big)^2 = L(t) \Big(1 - 2\lambda(w_1^2 + w_2^2) + 4\lambda^2(w_1 w_2-1)\Big)^2
\end{align*}
For any fixed $\lambda$, one can find $(w_1,w_2)$  with $w_1\cdot w_2  \approx 1$ and large enough $(w_1^2+w_2^2)$ to make 
$ |1 - 2\lambda(w_1^2 + w_2^2) + 4\lambda^2(w_1 w_2-1)|\gg 1$, and therefore the loss will increase: $L(t+1)>L(t)$. 
}
But we can define an \textit{adaptive} learning rate $\lambda(\vw)$ which guarantees that the loss decreases. 

\begin{theorem}
\label{thm:discrtete_gd}
Consider discrete GD (Eq.~\ref{eq:dgd_step}). Assume that  $|W-1|<\dfrac{1}{2}$. If we define an adaptive learning rate 
$ \lambda(\vw)=\dfrac{1}{4\sum(1/w_i^2)} $, then the loss  monotonically converges to 0 with a linear rate.  
\end{theorem}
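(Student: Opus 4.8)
The plan is to collapse the $d$-dimensional dynamics onto the scalar quantity $W=w_1\cdots w_d$, since $L=(W-1)^2$ depends on the weights only through $W$. First I would substitute the update $w_i(t+1)=w_i-2\lambda(W-1)(W/w_i)$ into the product $W(t+1)=\prod_i w_i(t+1)$ and factor out each $w_i$, obtaining $W(t+1)=W\prod_i\!\left(1-2\lambda(W-1)\,W/w_i^2\right)$. Plugging in the adaptive rate $\lambda=1/(4S)$ with $S:=\sum_i 1/w_i^2$ turns each factor into $1-\alpha p_i$, where $\alpha:=\tfrac12 W(W-1)$ and the weights $p_i:=(1/w_i^2)/S$ form a probability vector ($p_i\ge 0$, $\sum_i p_i=1$). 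The hypothesis $|W-1|<\tfrac12$ gives $W\in(\tfrac12,\tfrac32)$ and hence $|\alpha|<\tfrac38$, so every factor satisfies $1-\alpha p_i\ge 1-|\alpha|>\tfrac58>0$ and in particular $W(t+1)>0$.

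Next I would sandwich the product. The Weierstrass product inequality gives $\prod_i(1-\alpha p_i)\ge 1-\alpha\sum_i p_i=1-\alpha$, while $1-\alpha p_i\le e^{-\alpha p_i}$ gives $\prod_i(1-\alpha p_i)\le e^{-\alpha}$; both bounds are valid for either sign of $\alpha$ because all the $\alpha p_i$ share that sign. Multiplying by $W>0$ yields $W(1-\alpha)\le W(t+1)\le W e^{-\alpha}$. Since the distance from a fixed point to an interval is maximized at an endpoint, this reduces the entire claim to the two scalar inequalities $|W(1-\alpha)-1|\le \rho\,|W-1|$ and $|We^{-\alpha}-1|\le\rho\,|W-1|$ for a single $\rho<1$.

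Writing $\delta:=W-1\in(-\tfrac12,\tfrac12)$ and $\alpha=\tfrac12(1+\delta)\delta$, the lower endpoint is exact and clean: $W(1-\alpha)-1=\delta\bigl(1-\tfrac12(1+\delta)^2\bigr)$, so its contraction factor $g(\delta)=1-\tfrac12(1+\delta)^2$ lies in $(-\tfrac18,\tfrac78)$ on the whole interval, with the extreme value $\tfrac78$ approached only as $\delta\to-\tfrac12$. For the upper endpoint I would bound $We^{-\alpha}$ using $e^{-\alpha}\le 1/(1+\alpha)$ (legitimate since $1+\alpha>\tfrac58>0$), reducing it to the rational expression $W/(1+\alpha)-1=\tfrac12\delta(1-\delta)/(1+\alpha)$, and check that its factor also stays below $\tfrac78$ in absolute value (in fact it never exceeds $\tfrac12$). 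Taking $\rho=\tfrac78$ then gives $|W(t+1)-1|\le\tfrac78|W(t)-1|$, hence $L(t+1)\le\tfrac{49}{64}\,L(t)$: the loss decreases monotonically and converges to $0$ at a linear rate. The same inequality gives $|W(t+1)-1|<\tfrac12$, so the hypothesis is preserved and the recursion iterates indefinitely.

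The main obstacle is the nonlinearity of $\prod_i(1-\alpha p_i)$: a naive first-order expansion suggests the contraction factor $1-\tfrac12 W^2$, which has magnitude below $1$ precisely for $W\in(0,2)$, but one must rule out that the neglected higher-order terms spoil this. The sandwich bounds are what make the argument rigorous without tracking the individual $w_i$, and the delicate point is the corner $W\to\tfrac12$ (equivalently $\delta\to-\tfrac12$), where the contraction is weakest ($\rho\to\tfrac78$) and both endpoint estimates must be verified to remain strictly below $1$.
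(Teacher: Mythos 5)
The proposal is correct, and while it starts from the same identity as the paper --- $W(t+1)=W\prod_i\big(1-2\lambda(W-1)W/w_i^2\big)$, which collapses the dynamics onto the scalar $W$ --- it controls the product in a genuinely different way. The paper expands the product into an alternating series in the elementary symmetric polynomials of the $1/w_i^2$, bounds the ratio of consecutive terms by $q\le 3/8$, and runs a two-case analysis on the sign of $(W-1)W$ to pin the contraction factor into $(-\frac{4}{5},\frac{7}{8})$ and $(-\frac{1}{8},\frac{59}{64})$. You instead observe that $p_i=(1/w_i^2)/\sum_j(1/w_j^2)$ is a probability vector, so the Weierstrass product inequality and $1-x\le e^{-x}$ sandwich the entire product between $1-\alpha$ and $e^{-\alpha}$ with $\alpha=\frac{1}{2}W(W-1)$; this reduces the theorem to two one-variable inequalities in $\delta=W-1$, avoids both the case split and the ratio-test bookkeeping, and yields the slightly sharper uniform rate $L(t+1)\le\frac{49}{64}L(t)$ versus the paper's $(\frac{59}{64})^2$. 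You also make explicit a point the paper leaves implicit: $|W(t+1)-1|<\frac{1}{2}$, so the hypothesis propagates and the contraction can be iterated. One small slip: the parenthetical claim that the upper-endpoint factor $\frac{1}{2}(1-\delta)/(1+\alpha)$ ``never exceeds $\frac{1}{2}$'' is false --- as $\delta\to-\frac{1}{2}$ it tends to $\frac{6}{7}$ --- but since $\frac{6}{7}<\frac{7}{8}$ your stated bound $\rho=\frac{7}{8}$ survives and nothing in the argument breaks.
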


\begin{proof}
Let's estimate the loss change for a gradient descent step: 
\begin{align*}
 & W(t+1) -1=\prod \big(w_i-2\lambda(W-1)W/w_i\big)-1 \\ 
 &= \prod \big(w_i(1-2\lambda(W-1)W/w_i^2)\big) -1= W\cdot \prod\big(1-2\lambda(W-1)W/w_i^2\big)-1\\ 
 &= W\cdot\Big(1-2\lambda(W-1)W\big(\sum_i 1/w_i^2\big) +4\lambda^2(W-1)^2 W^2\big(\sum_{i\neq j} 1/(w_i^2 w_j^2)\big)\\
  &\quad -8\lambda^3(W-1)^3 W^3\big(\sum_{i\neq j\neq k} 1/(w_i^2 w_j^2 w_k^2)\big)+...\Big) - 1\\
 &= (W-1)\cdot\Big(1 - 2\lambda W^2 \big(\sum_i 1/w_i^2\big) +4\lambda^2(W-1)W^3\big(\sum_{i\neq j} 1/(w_i^2 w_j^2)\big)\\
  &\quad -8\lambda^3(W-1)^2 W^4\big(\sum_{i\neq j\neq k} 1/(w_i^2 w_j^2 w_k^2)\big)+...\Big)= (W-1)\cdot \Big(1-\frac{W}{W-1}\cdot S\Big)
\end{align*}
Here  $S=a_1-a_2+a_3- ... +a_d$ is a series with  
$a_k = \big(2\lambda(W-1)W\big)^k \big(\sum_{i\neq j \neq... m}1/(w_i^2 w_j^2...w_{m}^2)\big)$:
\begin{align*}
  &S =  2\lambda(W-1)W\big(\sum_i 1/w_i^2\big) -4\lambda^2(W-1)^2W^2(\sum_{i\neq j} 1/(w_i^2 w_j^2))\\
  &\quad +8\lambda^3(W-1)^3 W^3\big(\sum_{i\neq j\neq k}1/(w_i^2 w_j^2 w_k^2)\big)+\dots
\end{align*}
Consider the factor $k = \big(1-\frac{W}{W-1}\cdot S\big)$. 
To  prove that $|k|<1$, we consider two cases.

\textbf{CASE 1: $\boldsymbol{(W-1)W <0}$.} 
In this case, the series $S$  can be written as:
\begin{align*}
 &S = -\Big( 2\lambda(1-W)W(\sum_i 1/w_i^2) +4\lambda^2(1-W)^2W^2(\sum_{i\neq j} 1/(w_i^2 w_j^2))+\\
 &\quad +8\lambda^3(1-W)^3 W^3(\sum_{i\neq j\neq k}
    1/(w_i^2 w_j^2 w_k^2)) + ... \Big) 
    \geq 2\lambda(W-1)W(\sum_i 1/w_i^2)\frac{1}{1-q}
\end{align*}
where $q$ is:
\begin{align*}
q &=\Big|\dfrac{a_{k+1}}{a_k}\Big|=\left|\dfrac{(2\lambda(W-1)W)^{k+1}
  \big(\sum_{i\neq...\neq{m+1}}
   1/(w_i^2...w_{m+1}^2)\big)}
 {(2\lambda\ (W-1)W)^k
  \big(\sum_{i\neq ...\neq m} 1/(w_i^2...w_{m}^2)\big)}\right|\\
 &\leq 2\lambda|(W-1)W|\dfrac
  {\big(\sum_{i\neq ...\neq m} 1/(w_i^2 ...w_{m}^2)\big)
    \big(\sum 1/w_i^2\big)}
  {\sum_{i\neq ...\neq m} 1/(w_i^2 ...w_{m}^2)} = 2\lambda|(W-1)W|\big(\sum 1/w_i^2\big)
 \leq 
 \frac{3}{8} 
\end{align*}
So on the one hand: 
$ k=1 - \frac{W}{W-1}S \geq 1- \frac{W}{W-1}\cdot 2\lambda(W-1)W(\sum 1/w_i^2)\frac{1}{1-q} \geq  -\frac{4}{5}$.

On the other hand:
$k< 1-\frac{W}{W-1}\cdot 2\lambda(W-1)W(\sum_i 1/w_i^2)= 1-2\lambda W^2 (\sum 1/w_i^2)  <\frac{7}{8}$.

\textbf{CASE 2: $\boldsymbol{(W-1)W>0}$.} 
In the series $S=a_1-a_2+a_3-...$, all terms $a_i$ are now positive. Since  $q=\Big|\dfrac{a_{k+1}}{a_k}\Big|<\dfrac{3}{8}$, we have that  $\dfrac{5}{8} a_1 < a_1-a_2 < S < a_1$. 

On the one hand:
$k = 1-\frac{W}{W-1}S \geq 1-\frac{W}{W-1} a_1
=1-2\lambda(\sum 1/w_i^2) \cdot W^2  >
  - \frac{1}{8}$.

On the other hand:
$k=1-\frac{W}{W-1}S \leq 1- \frac{5}{8}\cdot\frac{W}{W-1}a_1
 = 1- \frac{5}{8} \cdot 2\lambda(\sum 1/w_i^2)\cdot W^2
  <\frac{59}{64}$.

To conclude, in CASE 1 we prove that  $-\frac{4}{5}< k < \frac{7}{8}$ and in CASE 2 that $ -\frac{1}{8}<k <\frac{59}{64}$. 

Since $L(t+1) < L(t)\cdot k^2$, the loss $L$ monotonically converges to 0 with rate $k^2$.
\end{proof}

\subsection{Gradient descent: implicit regularization}

\begin{theorem}
\label{thm:discrtete_gd_imbalance}
Consider discrete GD (Eq.~\ref{eq:dgd_step}). 
Assume that  $|W-1|<\dfrac{1}{2}$. 
If we define an adaptive learning rate 
$ \lambda(\vw)=\dfrac{1}{4\sum(1/w_i^2)} $, then the layer imbalance  monotonically decreases.
\end{theorem}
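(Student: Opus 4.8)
The plan is to track how the squared layer norms $w_i^2$ evolve under one GD step and to show that every pairwise gap $|w_i^2 - w_j^2|$ contracts; since $D(\vw)$ is the maximum of these gaps, monotonicity of $D$ follows immediately.

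First I would write the update in multiplicative form. From Eq.~\ref{eq:dgd_step}, $w_i(t+1) = w_i(1 - c/w_i^2)$ where $c := 2\lambda(W-1)W$ is common to all coordinates. Squaring gives
\begin{equation*}
w_i^2(t+1) = w_i^2 - 2c + \frac{c^2}{w_i^2},
\end{equation*}
and the constant $-2c$ cancels in any difference, so
\begin{equation*}
w_i^2(t+1) - w_j^2(t+1) = (w_i^2 - w_j^2) + c^2\Big(\frac{1}{w_i^2} - \frac{1}{w_j^2}\Big) = (w_i^2 - w_j^2)\Big(1 - \frac{c^2}{w_i^2 w_j^2}\Big).
\end{equation*}
This exact per-pair relation is the heart of the argument: each gap is simply rescaled by the factor $f_{ij} := 1 - c^2/(w_i^2 w_j^2)$.

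Next I would bound $f_{ij}$ using the adaptive learning rate. Substituting $\lambda = 1/(4\sum_k 1/w_k^2)$ gives $c = (W-1)W/(2\sum_k 1/w_k^2)$, hence
\begin{equation*}
\frac{c^2}{w_i^2 w_j^2} = \frac{(W-1)^2 W^2}{4\,w_i^2 w_j^2\big(\sum_k 1/w_k^2\big)^2}.
\end{equation*}
Keeping only the $i,j$ terms of the positive sum and applying AM--GM, $\big(\sum_k 1/w_k^2\big)^2 \geq (1/w_i^2 + 1/w_j^2)^2 \geq 4/(w_i^2 w_j^2)$, so the denominator is at least $16$. Combined with $|W-1| < 1/2$ (whence $(W-1)^2 < 1/4$ and $W^2 < 9/4$), this yields $0 \leq c^2/(w_i^2 w_j^2) < 9/256 < 2$, so $0 < f_{ij} \leq 1$ for every pair, with strict inequality $f_{ij} < 1$ whenever $c \neq 0$ (equivalently $W \neq 1$; note $W \neq 0$ since $W > 1/2$).

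Finally I would pass from pairs to the maximum. Since $|w_i^2(t+1) - w_j^2(t+1)| = f_{ij}\,|w_i^2(t) - w_j^2(t)| \leq |w_i^2(t) - w_j^2(t)| \leq D(\vw(t))$ holds for every $i,j$, taking the maximum over pairs gives $D(\vw(t+1)) \leq D(\vw(t))$; away from the minimum ($W \neq 1$) every factor is strictly below $1$, giving strict decrease. The only real obstacle is the bookkeeping to reach the clean factorization above — the cancellation of the $-2c$ term and the rewriting of $1/w_i^2 - 1/w_j^2$ as $-(w_i^2 - w_j^2)/(w_i^2 w_j^2)$ — after which the learning-rate bound and the max-over-pairs step are routine. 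A minor point to dispatch is the degenerate case $d=1$, where $D \equiv 0$ and the claim is vacuous.
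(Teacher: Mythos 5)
Your proposal is correct and follows essentially the same route as the paper: both derive the exact per-pair factorization $w_i^2(t+1)-w_j^2(t+1) = (w_i^2-w_j^2)\bigl(1-4\lambda^2(W-1)^2W^2/(w_iw_j)^2\bigr)$ and then bound the contraction factor into $(1-9/256,\,1]$ using the same AM--GM step $4/(w_iw_j)^2 \le (1/w_i^2+1/w_j^2)^2 \le (\sum_k 1/w_k^2)^2$ together with $|W-1|<1/2$. Your write-up is slightly more careful about passing from pairwise gaps to the maximum $D$ and about when the decrease is strict, but the substance is identical.
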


\begin{proof}
Let's compute the layer imbalance $D_{ij}$ for the layers $i$ and $j$ after one GD step:
\begin{align*} 
 & D_{ij}(t+1)= w_i(t+1)^2 - w_j(t+1)^2=\big(w_i-2\lambda(W-1)W/w_i\big)^2 -\big(w_j-2\lambda(W-1)W/w_j\big)^2\\
 &=(w_i^2-w_j^2)\cdot
    \big(1-4\lambda^2(W-1)^2 W^2 /(w_i w_j)^2 \big) =D_{ij}\cdot \big(1-4\lambda^2(W-1)^2 W^2 /(w_i w_j)^2 \big)    
 \end{align*}
On the one hand, the factor $k=1-4\lambda^2(W-1)^2 W^2/(w_i w_j)^2 \leq 1$. 

On the other hand:
\begin{align*} 
 k&= 1-4\lambda^2(W-1)^2W^2/(w_i w_j)^2 \geq 1-\lambda^2(W-1)^2W^2(1/w_i^2 + 1/w_j^2)^2\\
 &\geq 1-\lambda^2(\sum 1/w_l^2)^2 (W-1)^2W^2 
  \geq 1-\frac{9}{256} =\frac{247}{256}
\end{align*}
So $D_{ij}(t+1) = k \cdot D_{ij}(t)$ and $\frac{247}{256} < k \leq 1$.  This guarantees that the layer imbalance decreases.
\end{proof}

\textit{Note.} We proved only that the layer imbalance $D$ decreases, but not that $D$ converges to $0$. The layer imbalance may stay large, if the loss $L\rightarrow 0$ too fast or if $W\approx 0$, so the factor $k=1-4\lambda^2\cdot L \cdot W^2(1/(w_i w_j))^2\rightarrow 1$.
To make the layer imbalance $D\rightarrow 0$, we should keep the   loss in certain range, e.g. $\frac{1}{4}< |W-1| < \frac{1}{2}$. For this, we could increase the learning rate if the loss becomes too small, and decrease learning rate if loss becomes large.

\section{Explicit regularization}
In this section, we prove that regularization methods, such as weight decay, noise data augmentation, and continuous dropout, decrease the layer imbalance.

\subsection{Training with weight decay}
As before, we consider the gradient descent for linear network $(w_1,\dots, w_d)$ with $d$ layers. 
Let's add the weight decay (WD) term to the loss:
$\bar{L}(\vw)=(w_1 \cdots w_d - 1)^2 + \mu(w_1^2+\dots +w_d^2)$.

The continuous GD with weight decay is described by the following DEs: 
\begin{align} 
\label{cd_weight_decay}
 \frac{dw_i}{dt}=-\lambda \frac{\partial \bar{L}}{\partial w_i}
   =-2\lambda \big((W-1)(W/w_i) +\mu \cdot w_i\big)
\end{align}
Accordingly, the loss dynamics for continuous GD with weight decay is:
\begin{align*} 
& \frac{dL}{dt} = \sum \frac{\partial{L}}{\partial w_i} \cdot \frac{dw_i}{dt} 
 =-4\lambda   \Big((W-1)^2W^2\big(\sum 1/w_i^2\big)
 + \mu\cdot d\cdot (W-1)W\Big)\\
 &= -4\lambda \big(\sum 1/w_i^2\big) W^2  \big(W-1\big)\big(W-(1-\mu \dfrac{ d}{W (\sum 1/w_i^2)})\big)
\end{align*}
The loss decreases when  $k=(W-1)\big(W-(1-\mu \dfrac{ d}{W (\sum 1/w_i^2)})\big)>0$, outside the \textit{weight decay band}: $ 1-\mu \dfrac{ d}{W(\sum 1/w_i^2)} \leq W \leq 1$.
The width of this band is controlled by the weight decay $\mu$. 

We can divide GD training with weight decay into two phases: (1) optimization and (2) regularization. 
During the first phase, the loss decreases until the trajectory gets into the WD-band. During the second phase, the loss $L$ can oscillate, but the trajectory stays inside the WD-band (Fig.~\ref{fig:gd_wd}) and  goes toward a flat minima area. The layer imbalance monotonically decreases: 
\begin{align*} 
 \frac{d(w_i^2 -w_j^2)}{dt}&=-4\lambda \cdot \Big( \big((W-1)W +\mu w_i^2\big) - \big((W-1)W+\mu w_j^2\big)\Big)= -4\lambda \cdot  \mu \cdot (w_i^2 - w_j^2)
\end{align*}

\begin{figure}[htb!]
  \centering
  \begin{subfigure}[b]{0.32\textwidth}
\centering
\includegraphics[width=0.9\textwidth]{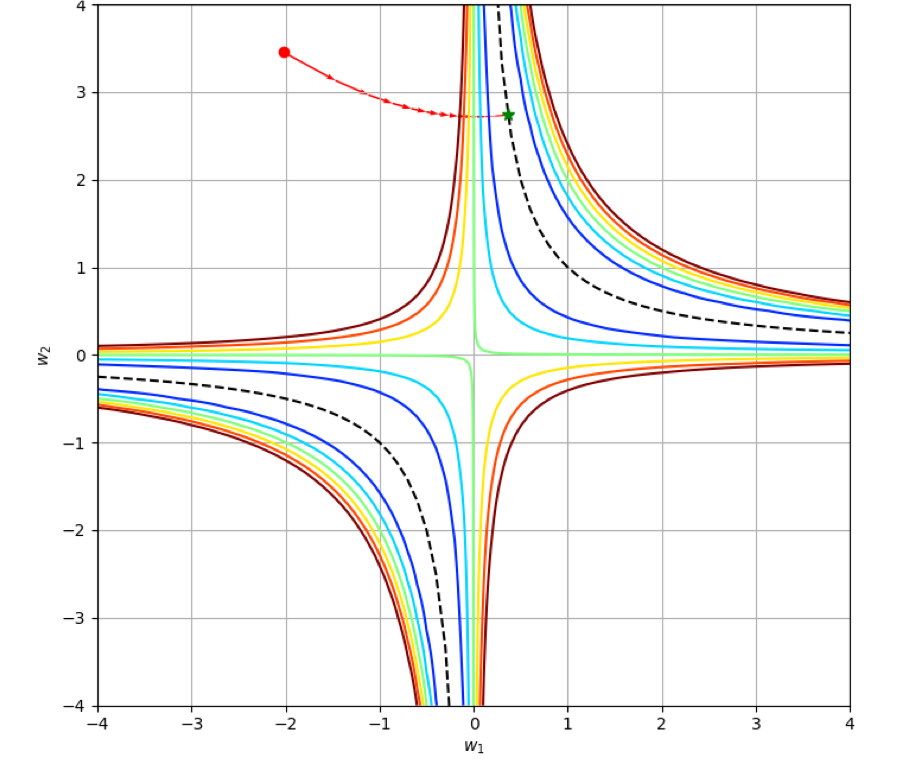}
\caption{Continuous GD }

\label{fig:traj_hyperbola}
   \end{subfigure}%
  \begin{subfigure}[b]{0.32\textwidth}
    \centering
    \includegraphics[width=0.9\textwidth]{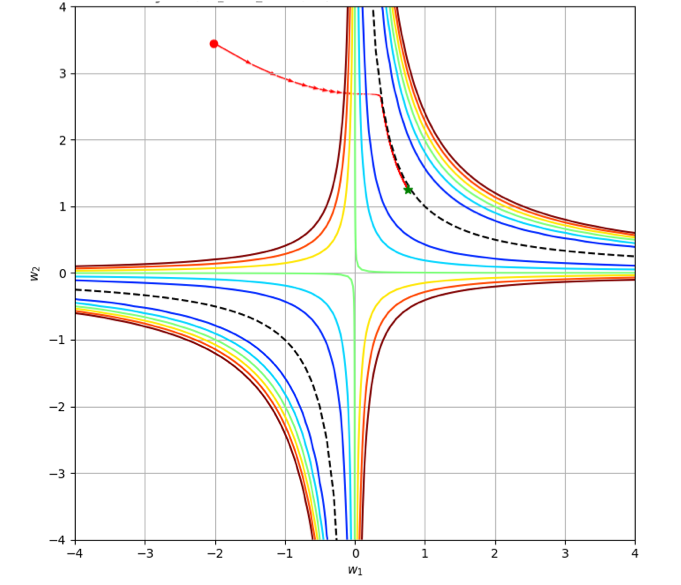}
    \caption{GD with weight decay}
    \label{fig:gd_wd}
   \end{subfigure}%
  \begin{subfigure}[b]{0.30\textwidth}
    \centering
    \includegraphics[width=0.9\textwidth]{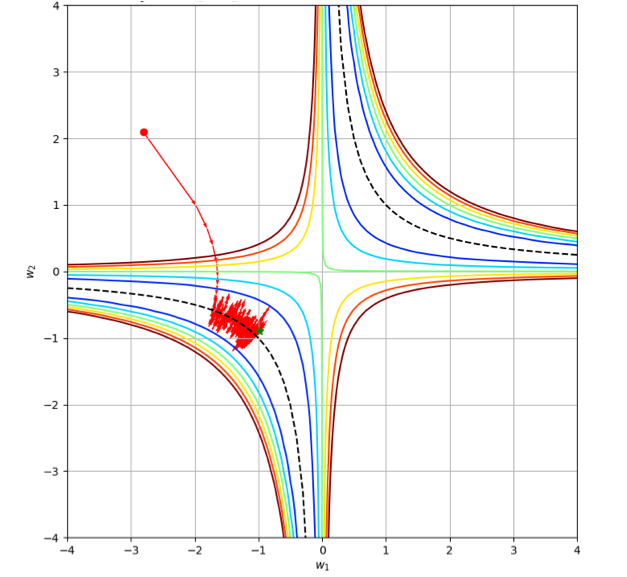}
    \caption{GD with noise augmentation}
    \label{fig:gd_noise}
   \end{subfigure}
  \caption{The training  trajectories for (a) continuous GD, (b) GD with  weight decay,  and (c) GD with noise augmentation. The trajectory for continuous GD is a hyperbola: $ w_i^2(t) - w_j^2(t)=$ const. The trajectories for GD with weight decay and noise augmentation have two parts: (1) optimization -- the trajectory goes toward the minima manifold, and (2) regularization -- the trajectory goes along minima manifold toward a flat area.}
\end{figure}

\subsection{Training with noise augmentation}

\citet{Bishop1995} showed that for shallow networks, training with noise is equivalent to Tikhonov regularization. We extend this result to DLNs.

Let's augment the training data with noise: $\Tilde{x}=x\cdot(1+\eta)$, where the noise $\eta$ has $0-$mean and is bounded: $|\eta|\leq \delta < \frac{1}{2}$. The DLN with noise augmentation can be written in the following form:
\begin{align}
\label{eq:noise_aug}
 \Tilde{y} = w_1\cdots w_d \cdot(1+\eta) x 
\end{align}
This model  also describes  continuous dropout \citep{Srivastava2014}  when  layer outputs $h_i$ are multiplied with the noise: $\Tilde{h}_{i} = (1+\eta)\cdot h_i$. This model can be also used for  continuous drop-connect \citep{Kingma2015b, wan2013} when the  noise is applied to weights: $\Tilde{w}_i=(1+\eta)\cdot w_i$. 

The GD  with noise augmentation is described by the following stochastic DEs:
\begin{align*} 
 \frac{dw_i}{dt} = -\lambda\frac{\partial \Tilde{L}}{\partial w_i}
   = -2\lambda \cdot (1+\eta)(W(1+\eta)-1)(W/w_i)
\end{align*}
Let's consider  loss dynamics:
\begin{align*} 
 &\frac{d L}{dt}= \sum \Big(\frac{\partial L}{\partial w_i} \cdot  \frac{dw_i}{dt}\Big) =
  -4\lambda(1+\eta) W^2 \big(\sum 1/w_i^2 \big) (W-1)(W(1+\eta)-1)\\
  &= -4\lambda(1+\eta)^2 W^2\big(\sum 1/w_i^2\big) 
  \cdot \Big((W-1)(W-\frac{1}{1+\eta}) \Big)
\end{align*}
The loss decreases while the factor  $k=(W-1)(W-\dfrac{1}{1+\eta})=(W-1)(W-1-\dfrac{\eta}{1+\eta})>0$, outside of the \textit{noise band}
$ 1 - \dfrac{\delta}{1+\delta} < W < 1 +\dfrac{\delta}{1-\delta}$.
The  training trajectory  is the hyperbola $w_i^2(t)-w_j^2(t)=$ const. 
When the trajectory gets inside the noise band, it oscillates around the minima manifold, but the layer imbalance remains constant for continuous GD.

Consider now discrete GD with noise augmentation: 
\begin{align} 
\label{eq:dgd_noise}
  w_i(t+1) = w_i-2\lambda(1+\eta)(W(1+\eta)-1)(W/w_i)
\end{align}
For discrete GD,  noise augmentation works similarly to  weight decay.  Training  has two phases: (1) optimization and (2) regularization (Fig.~\ref{fig:gd_noise}). During the optimization phase, the loss decreases until the trajectory hits the noise band. Next, the trajectory oscillates inside the noise band, and the layer imbalance  decreases. The noise variance $\sigma^2$ defines the band width, similarly to the weight decay $\mu$.

\begin{theorem}
Consider discrete GD with noise augmentation (Eq.~\ref{eq:dgd_noise}). Assume that the noise $\eta$ has 0-mean and is bounded: $|\eta|<\delta<\dfrac{1}{2}$. If we
define the adaptive learning rate $\lambda(\vw)=\dfrac{1}{2} \Big(\dfrac{2}{3}\Big)^5 \dfrac{1}{\sum 1/w_i^2}$, then the layer imbalance monotonically decreases inside the noise band $|W-1|<\delta$.
\end{theorem}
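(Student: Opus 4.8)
The plan is to mirror the proof of Theorem~\ref{thm:discrtete_gd_imbalance}, since the noisy update (\ref{eq:dgd_noise}) has exactly the same algebraic shape as the noiseless one, with the index-independent scalar $2\lambda(W-1)W$ replaced by $c := 2\lambda(1+\eta)\big(W(1+\eta)-1\big)W$. The structural fact I would lean on is that $c$ does not depend on the layer index $i$ (the multiplicative input noise $\eta$ is a single scalar per step), so one step reads $w_i(t+1) = w_i - c/w_i$ for every layer simultaneously.

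First I would compute the pairwise imbalance after one step. Squaring gives $w_i(t+1)^2 = w_i^2 - 2c + c^2/w_i^2$; subtracting the analogous expression for layer $j$, the $-2c$ terms cancel and
\[
D_{ij}(t+1) = (w_i^2 - w_j^2) + c^2\Big(\tfrac{1}{w_i^2} - \tfrac{1}{w_j^2}\Big) = D_{ij}(t)\Big(1 - \tfrac{c^2}{(w_i w_j)^2}\Big).
\]
Thus the imbalance is again scaled by a factor $k = 1 - c^2/(w_i w_j)^2$, and it suffices to show $0 \le k \le 1$ (equivalently $0 \le c^2/(w_i w_j)^2 \le 1$) for every admissible noise realization; then $|D_{ij}(t+1)| \le |D_{ij}(t)|$ pairwise, and hence $\max_{i,j}|D_{ij}|$ is non-increasing.

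The bound $k \le 1$ is automatic. For the lower bound I would estimate $c^2/(w_i w_j)^2$ from above using AM--GM in the form $1/(w_i w_j)^2 \le \tfrac14(\sum_l 1/w_l^2)^2$, exactly as in Theorem~\ref{thm:discrtete_gd_imbalance}. With the prescribed $\lambda(\vw)$ the prefactor collapses, $\lambda^2(\sum 1/w_l^2)^2 = \tfrac14(2/3)^{10}$, leaving
\[
\frac{c^2}{(w_i w_j)^2} \le \tfrac14\,(2/3)^{10}\,(1+\eta)^2\,W^2\,\big(W(1+\eta)-1\big)^2.
\]
It then remains to bound the three noise/loss factors inside the band: from $|\eta| < \delta < \tfrac12$ and $|W-1| < \delta < \tfrac12$ one gets $(1+\eta)^2 < (3/2)^2$, $W^2 < (3/2)^2$, and $|W(1+\eta)-1| = |(W-1)+W\eta| \le |W-1| + W|\eta| < 5/4$. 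Substituting these confirms $c^2/(w_i w_j)^2$ is bounded by a constant well below $1$ (numerically around $0.04$), so $k \in (0,1]$ uniformly over $\eta$.

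The main obstacle is not the algebra but the constant bookkeeping: one must control $\big(W(1+\eta)-1\big)^2$ carefully inside the noise band, since the $W\eta$ term is precisely what the noise injects and it has to be measured against the $\delta$-width of the band, and then verify that the conservative learning rate $\tfrac12(2/3)^5/\sum 1/w_i^2$ leaves enough slack for $k$ to remain in $(0,1]$ \emph{for every} realization of $\eta$, not merely on average. Because the estimate is pointwise in $\eta$, the conclusion that the layer imbalance monotonically decreases is deterministic, paralleling the weight-decay and noiseless cases.
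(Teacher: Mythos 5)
Your proposal is correct and follows essentially the same route as the paper: the one-step pairwise recursion $D_{ij}(t+1)=D_{ij}(t)\bigl(1-c^2/(w_iw_j)^2\bigr)$, the AM--GM bound $1/(w_iw_j)^2\le\tfrac14\bigl(\sum_l 1/w_l^2\biggr)^2$, and substitution of the prescribed $\lambda(\vw)$ to force $k\in(0,1]$ pointwise in $\eta$. The only difference is cosmetic bookkeeping — the paper factors the noise as $(1+\eta)^2\bigl(W-\tfrac{1}{1+\eta}\bigr)$ and bounds $\bigl|W-\tfrac{1}{1+\eta}\bigr|\le\delta+\tfrac{\delta}{1-\delta}$, whereas you bound $|W(1+\eta)-1|\le|W-1|+W|\eta|$ directly — and both yield a constant comfortably below $1$.
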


\begin{proof}
Let's estimate the layer imbalance:
\begin{align*}
 &w_i^2(t+1)-w_j^2(t+1)\\
  &=\big(w_i-2\lambda(1+\eta)(W(1+\eta)-1)W/w_i\big)^2
  -\big(w_j-2\lambda(1+\eta)(W(1+\eta)-1)W/w_j\big)^2\\
 &= (w_i^2 -w_j^2)+
  4\lambda^2(1+\eta)^2(W(1+\eta)-1)^2\big(W^2/w_i^2- W^2/w_j^2\big)\\
&=(w_i^2-w_j^2) \cdot  
\Big(1-4\lambda^2(1+\eta)^4\big(W-\frac{1}{1+\eta}\big)^2 W^2/(w_i w_j)^2 \Big)
\end{align*}
On the one hand, the factor $k=1-4\lambda^2(1+\eta)^4\big(W-\dfrac{1}{1+\eta}\big)^2 W^2/(w_i w_j)^2 \leq 1$.

On the other hand:
\begin{align*}
 &k=1-4\lambda^2(1+\eta)^4\big(W-\frac{1}{1+\eta}\big)^2 W^2/(w_i w_j)^2\\
 &\geq 1- \lambda^2(1+\eta)^4\big(W-\frac{1}{1+\eta}\big)^2 W^2(1/w_i^2 + 1/w_j^2)^2\\
 & \geq 1-\lambda^2 (1+\eta)^4\big(W-1+\frac{\eta}{1+\eta}\big)^2 W^2 \big(\sum_i 1/w_i^2 \big)^2 \\
 & \geq 1- \lambda^2\big(\sum_i 1/w_i^2 \big)^2 \cdot (1+\delta)^4\big(\delta +\frac{\delta}{1-\delta}\big)^2 (1+\delta)^2 
 \geq 1- \lambda^2\big(\sum_i 1/w_i^2 \big)^2  (3/2)^{10} 
\end{align*}
Taking $\lambda =\dfrac{1}{2} \Big(\dfrac{2}{3}\Big)^5  \dfrac{1}{\sum 1/w_i^2}$ makes $0<k\leq 1$, which proves that the layer imbalance decreases.
\end{proof}
\textit{Note.} We can prove that the layer imbalance $E[D] \rightarrow 0$ if we also assume that all layers are uniformly bounded $|w_i|< C$. This  implies that there is  $\epsilon>0$ such that for all $\vw$ the adaptive learning rate $\lambda(\vw) > \epsilon$, and we can prove that the expectation $E(k)<1$:
\begin{align*}
 & E(k)=
  1- E\Big[4\lambda^2(1+\eta)^4\big(W-\frac{1}{1+\eta}\big)^2 W^2/(w_i w_j)^2 \Big]\\
 &\leq 1-4\lambda^2 W^2/(w_i w_j)^2 \cdot 
   (1+\sigma^2)^2 \frac{\sigma^2}{1+\sigma^2}  \leq 1- 4\lambda^2 \frac{1}{4C^4}\big(1+\sigma^2\big)\sigma^2
   \leq 1-\frac{\lambda^2 \sigma^2}{C^4} 
\end{align*}
This proves that the layer imbalance $D\rightarrow 0$ with rate  
$\big(1-\dfrac{\lambda^2\sigma^2}{C^4}\big)$. 

\section{SGD noise as implicit regularization}
In this section, we show that SGD works as implicit noise regularization, and that the layer imbalance converges to $0$.
As before, we train a DLN $y = W x $  with loss $L(\vw)=\frac{1}{N}\sum (W x_n  - y_n)^2$ on a normalized dataset with $N$ samples $\{x_n,y_n\}$:
\begin{align*} 
  &\sum x_i = 0; \; \; \frac{1}{N}\sum x^2_i = 1; \; \;
  \sum y_i = 0; \; \;  \frac{1}{N}\sum x_i y_i=1. 
\end{align*}
A stochastic gradient for a batch $\bar{B}$ with $B<N$ samples is:
\begin{align*} 
 &\frac{\partial L_B}{\partial w_i} = \frac{1}{|B|} \sum_{
 \bar{B}}2(W x_n^2 - x_n y_n)W/w_i 
 = 2 \Big(W  (\frac{1}{B} \sum_{\bar{B}}x_n^2) -  (\frac{1}{B}\sum_{\bar{B}}x_n y_n) \Big)W/w_i
\end{align*}
If batch size $B\rightarrow N$, then terms $\sum_{\bar{B}}x_n^2\rightarrow\sum_{N} x_n^2 = 1$ and 
$\sum_{\bar{B}} (x_n y_n)\rightarrow \sum_{\bar{B}}(x_n y_n)=1$.

So we can write the stochastic gradient in the following form:
\begin{align*} 
 & &\frac{\partial L_B}{\partial w_i}= 2 \Big(W (1+\eta_1) - (1+\eta_2)\Big)W/w_i= 
 2 \Big(W-1 + (W \eta_1 - \eta_2) \Big)W/w_i 
\end{align*}
The  factor $(1+\eta_1)$ works as noise data augmentation, and the term $\eta_2$ works as label noise.
Both $\eta_1$ and $\eta_2$ have $0$-mean. When loss is small, we can combine both components into one \textit{SGD noise} term: $\eta= W \eta_1 - \eta_2$. SGD noise $\eta$ has $0$-mean.  We assume that  SGD noise variance depends on  batch size in the following way:
$\sigma^2 \approx (\dfrac{1}{B}-\dfrac{1}{N})$.
The trajectory for continuous SGD is described by the stochastic DEs: 
\begin{align*} 
  \frac{dw_i}{dt}=-\lambda \cdot \frac{\partial L_B}{\partial w_i} 
  = -2 \lambda \Big( W-1+\eta \Big) W/w_i 
\end{align*}
Let's start with loss analysis:
\begin{align*} 
 \frac{dL}{dt}
   =-4\lambda W^2 \big(\sum1/w_i^2\big)\cdot (W-1)(W-1+\eta)
\end{align*}
For continuous SGD, the loss decreases anywhere except in the \textit{SGD noise band}: $(W-1)(W-1 + \eta)<0$. 
The band width depends on $B$: the smaller the batch, the wider the band. The SGD training  consists of two parts. First, the loss decreases until the trajectory hits the SGD-noise band. Then the trajectory oscillates inside the noise band. 
The layer imbalance remains constant for continuous SGD. 

Similarly to the noise augmentation, the layer imbalance decreases for discrete SGD:
\begin{align}
\label{eq:sgd}
  w_i(t+1) = w_i-2\lambda(W-1+\eta)W/w_i
\end{align}

\begin{theorem}
\label{thm:sgd}
Consider discrete SGD (Eq.~\ref{eq:sgd}). Assume that $|W-1|<\delta$, and that SGD noise satisfies $|\eta| \leq \delta< 1$. If we define the adaptive learning rate  $\lambda(\vw) = \dfrac{1}{2\delta(1+\delta) (\sum (1/w_i^2)}$, then the layer imbalance monotonically decreases. 
\end{theorem}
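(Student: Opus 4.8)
The plan is to mirror the calculation in the proof of Theorem~\ref{thm:discrtete_gd_imbalance} (and of the noise-augmentation theorem), because the SGD update~(\ref{eq:sgd}) has exactly the same algebraic shape as the earlier updates, with the factor $(W-1)$ replaced by $(W-1+\eta)$. First I would track the signed pairwise imbalance $D_{ij}=w_i^2-w_j^2$ and expand $w_i^2(t+1)$ directly from $w_i(t+1)=w_i-2\lambda(W-1+\eta)W/w_i$. The key structural observation is that the linear-in-$\lambda$ drift term $-4\lambda(W-1+\eta)W$ is \emph{identical for every layer} $i$ (it has no $1/w_i^2$ dependence), so it cancels when the $j$-equation is subtracted from the $i$-equation. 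What survives is
\begin{align*}
 D_{ij}(t+1)=D_{ij}(t)+4\lambda^2(W-1+\eta)^2W^2\Big(\tfrac{1}{w_i^2}-\tfrac{1}{w_j^2}\Big).
\end{align*}
Using $\tfrac{1}{w_i^2}-\tfrac{1}{w_j^2}=-\tfrac{w_i^2-w_j^2}{(w_iw_j)^2}$, this factors as $D_{ij}(t+1)=k\,D_{ij}(t)$ with $k=1-4\lambda^2(W-1+\eta)^2W^2/(w_iw_j)^2$, so the whole problem reduces to showing $0<k\le 1$ for every realization of the noise $\eta$.

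The bound $k\le 1$ is immediate, since the subtracted quantity is nonnegative. For the lower bound I would proceed exactly as before: apply the AM--GM inequality $4/(w_iw_j)^2\le(1/w_i^2+1/w_j^2)^2\le(\sum_l 1/w_l^2)^2$ to get $k\ge 1-\lambda^2(W-1+\eta)^2W^2(\sum_l 1/w_l^2)^2$. Substituting the adaptive rate $\lambda(\vw)=\tfrac{1}{2\delta(1+\delta)\sum(1/w_i^2)}$ makes $\lambda^2(\sum_l 1/w_l^2)^2=\tfrac{1}{4\delta^2(1+\delta)^2}$, after which I would plug in the two assumptions: the triangle inequality $|W-1+\eta|\le|W-1|+|\eta|<2\delta$ gives $(W-1+\eta)^2<4\delta^2$, and $|W-1|<\delta$ gives $W^2<(1+\delta)^2$. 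The product then collapses to strictly less than $1$, yielding $k>0$.

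Having established $0<k\le 1$ pathwise, I would conclude that $|D_{ij}(t+1)|=k\,|D_{ij}(t)|\le|D_{ij}(t)|$ for each pair, and hence $D(\vw)=\max_{i,j}|w_i^2-w_j^2|$ is non-increasing at every step, regardless of the noise realization (so no expectation is needed for the decrease claim itself, unlike the convergence-to-zero statement in the preceding Note). I do not expect a genuine obstacle here: the calculation is routine once the linear drift cancellation is spotted, and the only delicate point is verifying that the particular constant $2\delta(1+\delta)$ in the learning rate is exactly what forces the two crude bounds $(W-1+\eta)^2<4\delta^2$ and $W^2<(1+\delta)^2$ to cancel against $\lambda^2(\sum 1/w_l^2)^2$ and leave $k$ strictly positive. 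The real content is conceptual rather than computational: SGD's noise enters only through $(W-1+\eta)$, whose layer-independent part drops out of the imbalance difference, so SGD shrinks $D$ by the same mechanism as weight decay and explicit noise augmentation.
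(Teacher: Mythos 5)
Your proposal is correct and follows essentially the same route as the paper: the same factorization $D_{ij}(t+1)=k\,D_{ij}(t)$ with $k=1-4\lambda^2(W-1+\eta)^2W^2/(w_iw_j)^2$, the same AM--GM bound $4/(w_iw_j)^2\le(\sum_l 1/w_l^2)^2$, and the same substitution of the adaptive rate to force $0<k\le 1$ pathwise. The only cosmetic difference is that you bound $(W-1+\eta)^2<4\delta^2$ via the triangle inequality where the paper uses $(W-1+\eta)^2\le 2((W-1)^2+\eta^2)$; both land on the identical final estimate.
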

\begin{proof}
Let's estimate the layer imbalance:
\begin{align*}
 &w_i^2(t+1)-w_j^2(t+1)=\big(w_i-2\lambda(W-1+\eta)W/w_i\big)^2 -\big(w_j-2\lambda(W-1+\eta)W/w_j\big)^2\\
 &=(w_i^2-w_j^2) \cdot 
 \Big(1-4\lambda^2(W-1+\eta)^2 W^2 /(w_i w_j)^2 \Big)
\end{align*}
On the one hand, the factor 
$k=1-4\lambda^2(W-1+\eta)^2 W^2 /(w_i w_j)^2 \leq 1$.
On the other hand:
\begin{align*} 
 k&=1-4\lambda^2(W-1+\eta)^2 W^2 /(w_i w_j)^2
  \geq 1-2\lambda^2(W-1+\eta)^2 W^2 \big(1/w_i^2+1/w_j^2\big)^2\\
 &\geq 1-4\lambda^2 W^2 \big(\sum 1/w_i^2\big)^2 \cdot ((W-1)^2 +\eta^2)
 \geq 1- 4 \lambda^2 \big(\sum 1/w_i^2\big)^2 \cdot  \delta^2 (1+\delta)^2
\end{align*}
Setting  $\lambda = \dfrac{1}{2\delta(1+\delta)(\sum (1/w_i^2)}$ makes $0<k \leq 1$, which completes the proof.
\end{proof}

The layer imbalance $D\rightarrow 0$ at a rate proportional to the variance of SGD noise.
It was observed by \citet{keskar2016} that SGD training with a large batch leads to sharp solutions, which generalize worse than solutions obtained with a smaller batch.  
This fact directly follows from Theorem~\ref{thm:sgd}. The layer imbalance decreases at a rate $O(1- k\lambda^2 \sigma^2)$.
When a batch size increases, $B\rightarrow N$,  the variance of SGD-noise decreases as ${\approx(\dfrac{1}{B}-\dfrac{1}{N})}$. One can  compensate for smaller SGD noise with additional generalization:  data augmentation,  weight decay, dropout, etc.

\section{Discussion}

In this work, we explore dynamics for gradient descent training of deep linear networks. Using the layer imbalance metric,  we analyze how   regularization methods such as $L_2$-regularization, noise data augmentation, dropout, etc, affect  training dynamics.
We show that for all these methods the training has two distinct phases: optimization and regularization. During the optimization phase, the training trajectory goes from an initial point toward minima manifold, and   loss monotonically decreases. During  the regularization phase, the  trajectory goes along minima manifold toward flat minima, and the layer imbalance monotonically decreases.  We derive an analytical proof that noise augmentation and continuous dropout work similarly to $L_2$-regularization. 
Finally, we show that SGD behaves in the same way as gradient descent with noise regularization.

This work provides an analysis of regularization for scalar  linear networks. We leave the question of how regularization works for over-parameterized nonlinear networks for future research.  The work also gives a few interesting insights into training dynamics, which can lead to new algorithms for large batch training, new learning rate policies, etc. 



\subsubsection*{Acknowledgments}
We would like to thank Vitaly Lavrukhin, Nadav Cohen and Daniel Soudry for the valuable feedback.

\bibliographystyle{plainnat}
\bibliography{bibliography}

\end{document}